\newtheorem{theorem}{Theorem}
\newtheorem{assumption}{Assumption}
\newtheorem{proposition}{Proposition}
\newcommand{\m}[1]{\mathbf{#1}}
\title{Deterministic and Discriminative Imitation (D2-Imitation): \\
Revisiting Adversarial Imitation for Sample Efficiency}
\author {
    Mingfei Sun\textsuperscript{\rm 1, 2}, 
    Sam Devlin\textsuperscript{\rm 2}, 
    Katja Hofmann\textsuperscript{\rm 2}, 
    Shimon Whiteson\textsuperscript{\rm 1}
}
\begin{document}

\maketitle

\begin{abstract}
Sample efficiency is crucial for imitation learning methods to be applicable in real-world applications. 
Many studies improve sample efficiency by extending adversarial imitation to be off-policy 
regardless of the fact that these off-policy extensions could either change the original objective or involve complicated optimization.  
We revisit the foundation of adversarial imitation and propose an off-policy sample efficient approach that requires no adversarial training or min-max optimization.
Our formulation capitalizes on two key insights: (1) the similarity between the Bellman equation and the stationary state-action distribution equation allows us to derive a novel temporal difference (TD) learning approach; 
and (2) the use of a deterministic policy simplifies the TD learning. 
Combined, these insights yield a practical algorithm, Deterministic and Discriminative Imitation (D2-Imitation), 
which operates by first partitioning samples into two replay buffers and then learning a deterministic policy via off-policy reinforcement learning. 
Our empirical results show that D2-Imitation is effective in achieving good sample efficiency, outperforming several off-policy extension approaches of adversarial imitation on many control tasks.
\end{abstract}

\section{Introduction}
We consider a specific imitation learning task: 
given a limited number of expert demonstrations, 
 learn an optimal policy in a simulator without any access to the expert policy or reinforcement signals of any kind.
One of the important approaches for this problem setting, a.k.a \emph{Apprenticeship Learning}~\cite{abbeel2004apprenticeship}, is adversarial imitation, 
which learns a policy by minimizing the divergence between 
the stationary state-action distribution induced by the policy and the expert distribution given in the demonstrations~\cite{ho2016generative,kostrikov2018discriminator,ke2019imitation,ghasemipour2020divergence}.  
As the two distributions are given in the form of empirical samples, 
adversarial imitation has to roll out the policy to collect on-policy samples and then estimates the empirical divergence by training a discriminator~\cite{ho2016generative}.
Those on-policy samples are then discarded immediately after each policy update.  The resulting sample inefficiency is a crucial limitation, ruling out many real-world applications, where collecting interactions is often slow (e.g., physical robotics) or expensive (e.g., autonomous driving).

Some methods improve sample efficiency by relaxing the on-policy requirement in adversarial imitation, 
e.g., using off-policy samples to train the discriminator and policy~\cite{kostrikov2018discriminator,sasaki2018sample}.
Doing so, however, changes the original divergence minimization objective, 
hence providing no guarantee that the learned policy recovers the expert distribution~\cite{kostrikov2019imitation,sun2021softdice}.
Furthermore, the non-stationary rewards generated by the discriminator make actor-critic algorithms converge slowly (or even diverge)~\cite{sutton2018reinforcement} 
and could demand significantly more samples to learn a good policy~\cite{schulman2015high,fujimoto2018addressing}. 
Other methods translate the on-policy distribution matching problem into an off-policy optimization, 
which also circumvents the need to learn rewards~\cite{kostrikov2019imitation,sun2021softdice}. 
This type of approach involves a non-convex min-max optimization procedure that requires well-tuned regularization \cite{gulrajani2017improved},  
limiting its practical application. 

\begin{figure}
    \centering
    \includegraphics[width=1.0\linewidth]{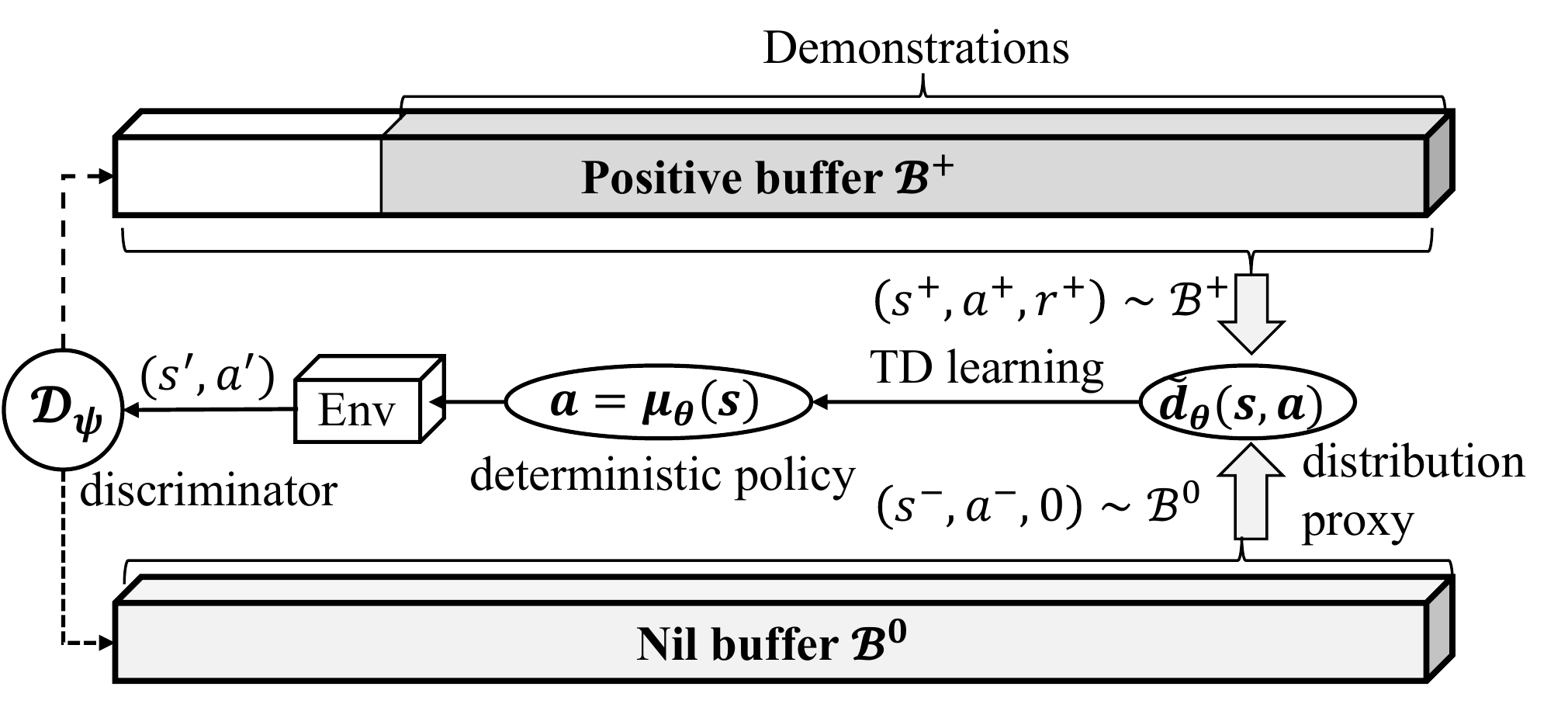}
    \caption{Overview of the D2-Imitation learning\footnotemark.}
    \label{fig:d2-imitation}
\end{figure}
\footnotetext{Code repository: \url{https://github.com/mingfeisun/d2-imitation}.}

In this paper, we revisit the foundation of adversarial imitation and propose an off-policy learning approach, 
Deterministic and Discriminative Imitation (D2-Imitation), that involves no adversarial training or non-convex min-max optimization. 
D2-Imitation capitalizes on two key insights: (1) 
the similarity between the Bellman equation and the stationarity equation of the state-action distribution from the formulation of adversarial imitation, 
and (2) the benefit of using a deterministic policy in the stationarity equation. 
In particular, instead of defining the divergence for distribution matching as in~\citet{ho2016generative,ghasemipour2020divergence}, 
we derive a temporal difference (TD) learning approach from the stationarity equation, which directly learns a proxy for the expert state-action distribution. 
Furthermore, we show that the use of a deterministic policy simplifies TD learning and yields a practical imitation learning algorithm, 
which operates by first partitioning samples into two replay buffers via a discriminator, 
and then learning a deterministic policy via off-policy RL, 
as presented in Figure~\ref{fig:d2-imitation}.
More specifically,  D2-Imitation first trains a discriminator $\mathcal{D}_{\phi}$ with state-action pairs from demonstrations (i.e., positive samples) and randomly generated samples. 
Whenever a sample is collected by rolling out the policy in simulation, 
the discriminator determines which buffer, $\mathcal{B}^{+}$ or $\mathcal{B}^{0}$, the sample should be put in. 
D2-Imitation then uniformly samples from the two replay buffers, assigns positive rewards to the samples from $\mathcal{B}^{+}$ and zero to those from $\mathcal{B}^{0}$, 
and uses those samples to update the proxy estimate of state-action distribution via an off-policy RL algorithm. 
We show that the notion of partitioning samples into two groups and assigning constant rewards follows theoretically from the use of a deterministic policy. 
Our empirical results show that D2-Imitation is effective in achieving good sample efficiency, outperforming many adversarial imitation approaches on different benchmark tasks with demonstrations from either deterministic or stochastic policies.

\section{Background}
\paragraph{Preliminaries}
We consider an infinite-horizon Markov decision process (MDP) with a finite state space $\mathcal{S}$, 
a finite action space $\mathcal{A}$, 
a transition kernel $p: \mathcal{S}\times\mathcal{A}\times\mathcal{S}\rightarrow (0, 1]$, 
a reward function $r: \mathcal{S}\times\mathcal{A}\rightarrow\mathbb{R}$, 
a discount factor $\gamma \in [0, 1)$, 
and an initial state distribution $p_0$ from which $s_0$ is sampled. 
An agent then interacts with its environment in discrete time steps as follows:
at each time step $t$ the agent observes environment state $s_t\in\mathcal{S}$, 
takes an action $a_t\in\mathcal{A}$ and transitions to a successor state $s_{t+1}\in\mathcal{S}$, 
also denoted as $s^\prime$ if the subscript is omitted. 
The agent's behavior is defined by a policy $\pi$, 
which maps states to a probability distribution over the action: $\pi: \mathcal{S}\rightarrow \mathcal{P}(\mathcal{A})$. 
A policy is deterministic if the probability distribution degenerates to a probability mass point.
We use $\mu$ to denote a deterministic policy and $\mu(s)$ to denote the action for $s$. 
The return from a state is defined as the sum of discounted future reward $R_t=\sum_{i=t}^{\infty}\gamma^{(i-t)}r(s_i, a_i)$.
The return depends on the actions chosen, and therefore on policy $\pi$.
A policy is optimal if it maximizes the expected return from the start distribution $J(\pi)=\mathbb{E}_{s_0\sim p_0, \pi_t}[R_0]$. 
In imitation learning, the agent needs to learn an optimal policy from expert samples $(s_E, a_E)$ that are generated by an optimal policy $\pi_E$, 
assuming the reward function $r$ is unknown and no reinforcement signals of any kind are available. 

\paragraph{Deterministic policy gradients}
For a parametrized policy $\pi_{\theta}$, its parameters $\theta$ can be updated in the direction of the performance gradient $\nabla_{\theta}J(\pi_{\theta})$. 
When the policy is deterministic, the parameters can be updated through the \textit{deterministic policy gradient}~\cite{silver2014deterministic},
\begin{equation}\label{equ:deterministic-pg}
\nabla_{\theta}J(\pi_\theta) = \mathbb{E}_{s\sim d^\pi} \big[ \nabla_a Q^\pi(s, a)|_{a=\pi(s)}\nabla_{\theta}\pi_{\theta}(s) \big], 
\end{equation}
where $d^\pi$ is the state distribution induced by $\pi$ (the formal definition is given in the next section), 
and $Q^\pi(s, a)$, also called the action value function, is the expected return after taking $a_t$ in state $s_t$ and thereafter following $\pi$: $Q^{\pi}(s_t, a_t) = \mathbb{E}_{r_{i\geq t}, s_{i\geq t}\sim p, a_{i\geq t}\sim\pi}\big[ R_t| s_t, a_t \big]$, 
To learn $Q^{\pi}(s, a)$, one can update the critic recursively based on the Bellman equation~\cite{sutton2018reinforcement}:
\begin{equation}\label{equ:bellman-equation}
Q^{\pi}(s, a) = \mathbb{E}_{r, s^\prime\sim p}\big[ r(s, a) + \gamma \mathbb{E}_{a^\prime\sim \pi} [Q^\pi(s^\prime, a^\prime)] \big]. 
\end{equation}
For a large state space, the value function can be estimated with a differentiable neural network $Q_{\phi}(s, a)$, with parameters $\phi$. 
To stabilize the training, in deep $Q$-learning~\cite{mnih2015human}, the network is updated by temporal difference learning with a target network $Q_{\phi^\prime}(s, a)$ to maintain a fixed objective $y$ over multiple updates:
$y = r + \gamma Q_{\phi^\prime}(s^\prime, a^\prime), \quad a^\prime\sim\pi_{\theta^\prime}(s^\prime),$
where the actions are selected from a target actor network $\pi_{\theta^\prime}$.

\section{Analysis of Stationary Distribution}\label{sec:d2-imitation}
We first revisit adversarial imitation 
and analyze the pivotal idea that underpins this family of approaches: the stationarity equation for state-action distributions. 
We show the similarity between the stationarity equation and the Bellman equation, 
and propose a temporal difference learning approach to learn a proxy for the state-action distribution. 
Finally, we consider a practical algorithm that learns a deterministic policy in an off-policy manner.

\subsection{Distribution Matching in Adversarial Imitation}\label{sec:duality}
Adversarial imitation capitalizes on the idea that the stationary distribution induced by the learned policy should match the empirical state-action distribution presented in the demonstrations~\cite{ho2016generative}. 
To characterize such idea, we first describe the duality between a policy and its stationary state-action distribution. 
For a policy $\pi\in\Pi$, define its stationary state-action distribution as  $d(s, a)=(1-\gamma)\sum_{t=0}^{\infty}\gamma^t d_t(s, a)$, 
where $d_t(s, a)=P \big( s_t=s, a_t=a|s_0\sim p_0, a_i\sim\pi(\cdot|s_i), s_{i+1}\sim p(\cdot|s_i, a_i), \forall i<t \big)$.
A basic result from~\citet{puterman2014markov} is that the set of valid state-action distributions $\Omega\triangleq\{d_{\pi}: \pi\in\Pi\}$ can be characterized by a feasible set of affine constraints, 
if $p_0(s)$ is the starting state distribution ($p_0(s)>0 \quad \forall s\in\mathcal{S}$), then
\begin{multline}\label{equ:feasible-constraint}
\Omega=\big\{d: \sum_{a^\prime} d(s^\prime, a^\prime) = (1-\gamma) p_0(s^\prime) + \\
\gamma\sum_{s, a}p(s^\prime|s, a)d(s, a) \quad d(s, a)\geq 0 \quad \forall (s^\prime, a^\prime)
\big\}.
\end{multline}
Furthermore, there is a one-to-one mapping for $\Pi$, $\Omega$:
\begin{proposition}\label{prop:one-one-correspondence}
\cite{syed2007game} If $d(s, a)$ is feasible by Equation~\eqref{equ:feasible-constraint}, then $d$ is the state-action distribution for $\pi\triangleq d(s, a)/\sum_{a}d(s, a)$, and $\pi$ is the only policy whose state-action distribution is $d$. 
\end{proposition}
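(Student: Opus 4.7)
The plan is to first verify that the candidate policy $\pi(a|s) \triangleq d(s,a)/\sum_{a'}d(s,a')$ is well-defined on all of $\mathcal{S}$, then show by induction on the time horizon (or by a fixed-point argument) that this $\pi$ induces exactly the state--action distribution $d$, and finally establish uniqueness by observing that any policy with stationary distribution $d$ must coincide with $\pi$ on its support.

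First I would check that $\pi$ is a legitimate policy. Setting $\mu(s) \triangleq \sum_{a} d(s,a)$, the feasibility constraint gives $\mu(s') = (1-\gamma)p_0(s') + \gamma \sum_{s,a} p(s'|s,a) d(s,a) \ge (1-\gamma) p_0(s') > 0$ for every $s'$, using the assumption $p_0(s) > 0$. Hence $\pi(a|s) = d(s,a)/\mu(s)$ is defined for all states, is nonnegative, and sums to one in $a$.

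Next I would show $d_\pi = d$. Starting from the definition of $d_\pi$, a routine unrolling gives the Bellman flow identity
\begin{equation*}
d_\pi(s',a') = (1-\gamma) p_0(s')\pi(a'|s') + \gamma\, \pi(a'|s') \sum_{s,a} p(s'|s,a)\, d_\pi(s,a).
\end{equation*}
Substituting $d(s',a') = \pi(a'|s')\mu(s')$ and using the feasibility constraint for $\mu$, the same identity is satisfied with $d_\pi$ replaced by $d$. Thus both $d$ and $d_\pi$ are fixed points of the affine operator $T_\pi : f \mapsto (1-\gamma) p_0 \otimes \pi + \gamma\, \pi \cdot (P^\top f)$. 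Because $\gamma < 1$ and $T_\pi$ is a $\gamma$-contraction in total-variation norm (its linear part has operator norm at most $\gamma$ on the subspace of signed measures), the fixed point is unique, so $d = d_\pi$.

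For uniqueness of the policy, suppose some other policy $\pi'$ also satisfies $d_{\pi'} = d$. Then marginalizing over actions gives the state distribution $\mu(s) = \sum_a d(s,a)$, and by the definition of the state--action distribution we must have $d(s,a) = \mu(s)\,\pi'(a|s)$. Since $\mu(s) > 0$ everywhere by the bound above, dividing yields $\pi'(a|s) = d(s,a)/\mu(s) = \pi(a|s)$ for all $(s,a)$. The main obstacle I expect is the contraction/uniqueness step for the fixed point of $T_\pi$: one must be careful to argue this within the correct normed space of finite signed measures rather than merely assert it, since the operator acts on an unnormalized object (the $(1-\gamma)$-discounted occupancy) rather than a probability distribution directly.
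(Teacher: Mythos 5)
The paper does not actually prove Proposition~\ref{prop:one-one-correspondence}; it is imported verbatim as a known result of \citet{syed2007game} (see also \citet{puterman2014markov}), so there is no in-paper argument to compare against. Your proof is correct and self-contained, and it follows the standard route from the linear-programming formulation of MDPs: positivity of the state marginal $\mu(s)\ge(1-\gamma)p_0(s)>0$ makes $\pi$ well-defined, both $d$ and $d_\pi$ satisfy the same Bellman flow equation, and uniqueness of its solution follows because the linear part $\gamma \m{P}_\pi^\top$ has $\ell_1$ operator norm (equivalently, spectral radius) at most $\gamma<1$, so $\m{I}-\gamma\m{P}_\pi^\top$ is invertible --- the same invertibility fact the paper later leans on in the proof of Theorem~\ref{theo:bfs-invariant}, there taken as an explicit assumption. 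Two small remarks: in finite state--action spaces you can state the uniqueness step simply as nonsingularity of $\m{I}-\gamma\m{P}_\pi^\top$ rather than worrying about the ``correct normed space of signed measures,'' since everything lives in $\mathbb{R}^{|\mathcal{S}||\mathcal{A}|}$; and the normalization $\sum_{s,a}d(s,a)=1$, which you implicitly need for $d$ to be a distribution, follows for free by summing the feasibility constraint over $s'$. The only caveat worth flagging is that your argument (like the paper's usage) relies on the standing hypothesis $p_0(s)>0$ for all $s$; without it $\mu(s)$ could vanish and $\pi$ would only be determined on the support of $\mu$, so uniqueness would hold only up to behavior on unreachable states.
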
 
This proposition also holds for deterministic policies.
In particular, a solution $d$ is a \emph{basic feasible solution} (BFS) of affine constraints if it cannot be expressed as a nontrivial convex combination of any other feasible solutions. 
If $d(s, a)$ is a BFS of Equation~\eqref{equ:feasible-constraint}, then $\pi$ is deterministic and vice versa. 
Thus, for a BFS $d(s, a)$ and its corresponding deterministic policy $\mu$, we always have $d(s, a)=0$ if $a\neq\mu(s)$ and $d(s, \mu(s))>0$~\cite{puterman2014markov}.

Proposition~\ref{prop:one-one-correspondence} forms the foundation of distribution matching approaches for adversarial imitation. 
Specifically, since the stationary state-action distribution uniquely describes a policy, 
one can learn a policy $\pi$ by minimizing the difference between two distributions: the state-action distribution $d_{\pi}(s, a)$ induced by the policy $\pi$ and the state-action distribution $d_{E}(s, a)$ in the demonstrations~\cite{ho2016generative,kostrikov2018discriminator,ke2019imitation,ghasemipour2020divergence}.  
In adversarial imitation, the distribution difference is estimated via a discriminator, 
which requires rolling out the policy (i.e., generator) to simulate an MDP process and to collect on-policy samples from $d_{\pi}(s, a)$ whenever the policy is updated in training~\cite{ho2016generative}.
Those on-policy samples are discarded immediately after each iteration~\cite{kostrikov2018discriminator,kostrikov2019imitation}. 
We take a different perspective by formulating a novel learning method from Equation~\eqref{equ:feasible-constraint}, 
without resorting to any divergence minimization or repeatedly requiring on-policy samples.

\subsection{Distribution Matching by TD Learning}
At first glance, the stationarity equation~\eqref{equ:feasible-constraint} resembles the Bellman equation for action-value functions. 
Specifically, multiplying by $\pi(a^\prime|s^\prime)$ on both sides of~\eqref{equ:feasible-constraint}, we have
\begin{multline}\label{equ:feasible-constraint-sa}
d(s^\prime, a^\prime) = (1-\gamma) p_0(s^\prime)\pi(a^\prime|s^\prime) + \\
\gamma\sum_{s, a}\pi(a^\prime|s^\prime) p(s^\prime|s, a)d(s, a), 
\end{multline}
which holds for $d\geq 0, \forall (s^\prime, a^\prime)\in\mathcal{S}\times\mathcal{A}$. 
With this new formulation, $d(s, a)$ can be reinterpreted as the action-value function and $(1-\gamma)p_0(s^\prime)\pi(a^\prime|s^\prime)$ as the expected reward over the state $s$ (denote $r(s, a) = (1-\gamma)p_0(s)\pi(a|s)$). 
The Bellman equation~\eqref{equ:bellman-equation} bootstraps the estimate of the current action value $Q(s, a)$ on the next ones $(s^\prime , a^\prime)$, 
i.e., leveraging successor samples.  By contrast, the stationary equation~\eqref{equ:feasible-constraint-sa} operates backwards by bootstrapping the current value $d(s^\prime, a^\prime)$ on the previous ones $(s, a)$, 
i.e., leveraging predecessor samples. 
In practice, it is difficult to directly obtain predecessor samples in model-free learning~\cite{liu2018breaking}. 
We thus consider an alternative equation as follows, 
\begin{multline}\label{equ:feasible-constraint-successor}
\tilde{d}(s, a) = (1-\gamma) p_0(s)\pi(a|s) + \\
\gamma\sum_{s^\prime, a^\prime}\pi(a^\prime|s^\prime) p(s^\prime|s, a) \tilde{d}(s^\prime, a^\prime), 
\end{multline}
which modifies the original stationarity equation by considering successor samples, rather than predecessor ones.
We use $\tilde{d}(s, a)$ to note that $\tilde{d}(s, a)$ defined by this equation is \emph{not} strictly a stationary state-action distribution for any $\pi$. 
However, we show below that the BFSs of Equation~\eqref{equ:feasible-constraint-successor} are the same as that of Equation~\eqref{equ:feasible-constraint-sa}. 
Denote state-action distribution as $\m{d} \in\mathbb{R}^{|\mathcal{S}|\times|\mathcal{A}|}$, 
initial state-action distribution as $\m{d}_0 \in\mathbb{R}^{|\mathcal{S}|\times|\mathcal{A}|}$, 
where $d_0(s, a) = p_0(s)\pi(a|s)$, 
and state-action transition probability as $\m{P}_{\pi}\in\mathbb{R}^{(|\mathcal{S}|\times|\mathcal{A}|)\times (|\mathcal{S}|\times|\mathcal{A}|)}$, 
where $P_{\pi}\big( (s, a), (s^\prime, a^\prime)\big)=\pi(a^\prime|s^\prime)p(s^\prime|s, a)$. 
We make the following assumption: 
\begin{assumption}
The Markov chain induced by $\pi$ is ergodic and $(\m{I} - \gamma \m{P}_{\pi}^\top)^{-1}$ exists. 
\end{assumption}
The first part of this assumption can be easily fulfilled in the real world as long as the problem to be considered has a recurring structure~\cite{levin2017markov}. 
The second part is to ensure that $\tilde{d}(s, a)$ is well-defined~\cite{yu2015convergence}. 
We then have the following, 
\begin{theorem}\label{theo:bfs-invariant}
For any strictly positive function $p_0(s)$, the affine constraints defined in Equations \eqref{equ:feasible-constraint-sa} and \eqref{equ:feasible-constraint-successor} have the same set of BFSs. 
\end{theorem}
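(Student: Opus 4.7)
The plan is to establish that the basic feasible solutions of Equations~\eqref{equ:feasible-constraint-sa} and~\eqref{equ:feasible-constraint-successor} are both in bijection with the set of deterministic policies, so that the two systems select the same extreme points when parameterized by $\pi$. I would first rewrite both equations in matrix form: \eqref{equ:feasible-constraint-sa} becomes $(\m{I} - \gamma\m{P}_\pi^\top)\m{d} = (1-\gamma)\m{d}_0$ and \eqref{equ:feasible-constraint-successor} becomes $(\m{I} - \gamma\m{P}_\pi)\tilde{\m{d}} = (1-\gamma)\m{d}_0$. Since $\det(\m{I}-\gamma\m{P}_\pi) = \det(\m{I}-\gamma\m{P}_\pi^\top)$, the invertibility part of the assumption gives a unique solution to either system for each fixed $\pi$, expressible as a convergent Neumann series $(1-\gamma)\sum_{k\geq 0}\gamma^{k}(\m{P}_\pi)^{k}\m{d}_0$ (or its transpose), which is entrywise nonnegative because $p_0>0$.

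Next, I would invoke Proposition~\ref{prop:one-one-correspondence} together with the subsequent remark, which characterize BFSs of \eqref{equ:feasible-constraint-sa} as precisely those feasible $d$ for which the implicit policy $\pi(a|s) = d(s,a)/\sum_{a'}d(s,a')$ is deterministic, yielding the state-action distribution $d_\mu$ associated with some deterministic $\mu$. The central new step is to prove the analogous characterization for \eqref{equ:feasible-constraint-successor}: that $\tilde{d}$ is a BFS if and only if its associated $\pi$ is deterministic. I would argue this by viewing the feasible set as parameterized by $\pi\in\Delta(\mathcal{A})^{|\mathcal{S}|}$ and showing that a nontrivial convex decomposition $\pi = \lambda\pi_1 + (1-\lambda)\pi_2$ forces, through the linearity of $(1-\gamma)\m{d}_0(\pi)$ together with a Neumann-series expansion of $(\m{I}-\gamma\m{P}_\pi)^{-1}$, a nontrivial convex decomposition of $\tilde{d}_\pi$. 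Conversely, each deterministic $\mu$ is an extreme point of the policy simplex and induces an extreme $\tilde{d}_\mu$. Composing the two characterizations then yields a bijection $d_\mu \leftrightarrow \tilde{d}_\mu$ indexed by the same deterministic policies, which is how I read ``the same set of BFSs''.

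The main obstacle is the decomposition step for \eqref{equ:feasible-constraint-successor}: because $\tilde{d}_\pi$ depends on $\pi$ through the inverse $(\m{I}-\gamma\m{P}_\pi)^{-1}$, the map $\pi\mapsto\tilde{d}_\pi$ is genuinely nonlinear, so a direct convexity argument does not immediately go through. The cleanest remedy is probably to lift to the joint constraint on the pair $(\pi,\tilde{d})$, observe that the lifted feasible set is still parameterized by the simplex $\Delta(\mathcal{A})^{|\mathcal{S}|}$ whose vertices are exactly the deterministic policies, and then check that projecting out $\pi$ preserves this vertex structure. The strict positivity of $p_0$ plays only a technical role here: it ensures each $\tilde{d}_\mu$ is nontrivial, so that distinct deterministic policies cannot collapse onto the same BFS.
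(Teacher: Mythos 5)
Your computational skeleton matches the paper's proof step for step: both write the two systems as $(\m{I}-\gamma\m{P}_\pi^\top)\m{d}=(1-\gamma)\m{d}_0$ and $(\m{I}-\gamma\m{P}_\pi)\tilde{\m{d}}=(1-\gamma)\m{d}_0$, infer invertibility of one matrix from the other, expand as a Neumann series, and use $p_0>0$ to get strict positivity at the coordinates $(s,\mu(s))$. The divergence is in what you call the ``central new step'': an extreme-point characterization of the feasible set of Equation~\eqref{equ:feasible-constraint-successor}. You correctly diagnose that $\pi\mapsto\tilde{d}_\pi$ is nonlinear so the convex-decomposition argument fails, but your proposed remedy does not close the gap. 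The joint constraint on $(\pi,\tilde{d})$ is bilinear through $\m{P}_\pi\tilde{\m{d}}$ (and $\m{d}_0$ also depends on $\pi$), so the lifted feasible set is not a polytope, and even for convex sets projection neither preserves nor reflects extreme points in general; ``check that projecting out $\pi$ preserves the vertex structure'' is precisely the assertion that requires proof, not a routine verification. As written, your proposal leaves its key step open.

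The paper never attempts that characterization, which is the shorter route you are missing. It reads ``BFS'' directly through Proposition~\ref{prop:one-one-correspondence} and the remark after it: a BFS is by definition a feasible solution whose induced policy is deterministic, i.e., one supported on the graph of some $\mu$. The theorem then reduces to the purely computational claim that for every deterministic $\mu$ and every strictly positive $p_0$ both linear systems are uniquely solvable, nonnegative, and strictly positive at every $(s,\mu(s))$ (the $k=0$ term of the Neumann series already contributes $(1-\gamma)p_0(s)>0$), so each deterministic policy indexes one feasible solution of each system regardless of the choice of $p_0$ --- that indexing is the sense in which the paper asserts the two BFS sets coincide. If you insist on the stronger reading you propose, note also a support issue the paper glosses over: for the successor system one has $\tilde{d}_\mu(s,a)=\gamma\sum_{s'}p(s'|s,a)\,\tilde{d}_\mu\bigl(s',\mu(s')\bigr)>0$ even when $a\neq\mu(s)$, so $\tilde{d}_\mu$ is generally \emph{not} supported on the graph of $\mu$, unlike $d_\mu$; your intended bijection therefore cannot be an identification of supports, let alone of vectors. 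Either adopt the paper's weaker indexing-by-deterministic-policies reading, or flag explicitly that the extreme-point characterization of the successor system remains unproved (the paper does not supply it either).
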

\begin{proof}
Denote BFS as $d^B(s, a)$ and its corresponding policy as $\mu$.
Since $d^B$ is a BFS, we have $d^B(s, a)=0$ if $a\neq\mu(s)$ following from Proposition~\ref{prop:one-one-correspondence}. 
We only need to show that $d^B(s, a)$ remains feasible for any strictly positive $p_0(s)$. 

The affine constraints given in Equations~\eqref{equ:feasible-constraint-sa} and~\eqref{equ:feasible-constraint-successor} can be represented in the matrix form 
$\m{d} = \gamma \m{P}_{\pi}^\top\m{d} + (1-\gamma) \m{d}_0$ 
and $\tilde{\m{d}} = \gamma \m{P}_{\pi}\tilde{\m{d}} + (1-\gamma) \m{d}_0$. 
Thus we have $\m{d} = (1-\gamma) (\m{I} - \gamma \m{P}_{\pi}^\top)^{-1} \m{d}_0$
and $\tilde{\m{d}} = (1-\gamma) (\m{I} - \gamma \m{P}_{\pi})^{-1} \m{d}_0$.
Note that $(\m{I} - \gamma \m{P}_{\pi})^{-1}$ also exists given that $(\m{I} - \gamma \m{P}_{\pi}^\top)^{-1}$ exists. 
Thus for both affine constraints, the BFSs are determined as follows: 
$\m{d}^B = (\m{I} - \gamma \m{P}^\top_{\pi})^{-1} \m{p}_0$, 
and $\tilde{\m{d}}^B = (\m{I} - \gamma \m{P}_{\pi})^{-1} \m{p}_0$. 
For strictly positive function $p_0(s)$, we have $\m{p}_0 > 0$.
Also, since $\gamma\in [0, 1)$ and the Markov chain induced by $\pi$ is ergodic (also iperiodic), 
we have $\forall s$, $\big[(\m{I} - \gamma \m{P}^\top_{\pi})^{-1} \m{p}_0 \big]_{\big(s,\mu(s)\big)} > 0$, 
and $\big[(\m{I} - \gamma \m{P}_{\pi})^{-1} \m{p}_0 \big]_{\big(s,\mu(s)\big)} > 0$.
Thus $d^B\big(s, \mu(s) \big)$ is feasible for both affine constraints. 
Therefore, the set of BFSs for these two affine constraints are the same. 
\end{proof}

Furthermore, as Equation~\eqref{equ:feasible-constraint-successor} relies on the successor samples, 
we can thus construct the following temporal difference (TD) update rule to learn $\tilde{d}(s, a)$, 
\begin{multline}\label{equ:td-learning}
\tilde{d}(s_t, a_t) \leftarrow \tilde{d}(s_t,a_t) + \\
\alpha_t \big[ r(s_t, a_t) + \gamma \tilde{d}(s_{t+1},a_{t+1}) - \tilde{d}(s_t, a_t) \big], 
\end{multline} 
where $r(s_t, a_t) = (1-\gamma) p_0(s_t)\pi(a_t|s_t)$ 
and $\{\alpha_t\}$ is a deterministic positive non-increasing sequence satisfying the Robbins-Monro condition~\cite{robbins1951stochastic}, 
i.e., $\sum_t\alpha_t = \infty$ and $\sum_t \alpha_t^2 < \infty$. 
The convergence analysis of such TD learning is the same as that for the Bellman operator~\cite{yu2015convergence} and is thus omitted here.

\section{D2-Imitation}
Theorem~\ref{theo:bfs-invariant} and the TD update in Equation~\eqref{equ:td-learning} suggest a TD approach to imitation learning:
if the reward $r(s, a)$ is set as $(1-\gamma)p_0(s)\mu_E(a|s)$, where $\mu_E(a|s)$ is the deterministic expert policy, 
one can recover the BFSs of Equation~\eqref{equ:feasible-constraint-sa} for the expert policy $\mu_E$ via the TD update rule defined in Equation~\eqref{equ:td-learning}. 
However, in Apprenticeship Learning, 
we have access only to state-action samples, 
i.e., $\{(s, a): a\sim\pi_E(\cdot|s)\}$. 
Thus, the reward $r(s, a) = (1-\gamma) p_0(s)\mu_E(a|s)$ in Equation~\eqref{equ:td-learning} is generally unknown. 
In the following, we show that the reward configuration can be simplified for a deterministic policy, which yields a practical algorithm.

For a deterministic policy $\mu_E$, the reward $r(s, a)$ in Equation~\eqref{equ:td-learning} can be further simplified. 
Specifically, we have $r(s, a) = (1-\gamma) p_0(s)\mu_E(a|s) = 0$ if $a\neq\mu_E(s)$, 
and $r(s, a)= (1-\gamma) p_0(s)$ if $a=\mu_E(s)$.
According to Theorem~\ref{theo:bfs-invariant}, 
the set of BFSs, corresponding to the deterministic policy, 
does not rely on $p_0(s)$ as long as it is strictly positive for all states.
We can thus redefine $r(s, a)$  as: 
$r_d(s, a) = 1$ for $a=\mu_E(s)$, and $r_d(s, a)=0$ for $a\neq\mu_E(s)$. 
Namely, we can partition all state-action pairs into two groups (corresponding to two replay buffers) based on $\mu_E(s)$: 
a \textit{positive buffer} $\mathcal{B}^{+}$, which contains all $(s, a)$ such that $a=\mu_E(s)$, 
and a \textit{nil buffer} $\mathcal{B}^{0}$, which contains all $(s, a)$ such that $a\neq\mu_E(s)$. 
As we have no access to the expert policy $\mu_E$ and no further queries can be made, 
we have to learn a discriminator that can tell whether $a=\mu_E(s)$ or $a\neq\mu_E(s)$ for a given $(s, a)$. 
Since the demonstrations are given as a finite set of samples $\{(s_E, a_E): a_E\sim\pi(\cdot|s_E)\}$, 
we can  train such a discriminator as follows:
for each $s_E \in \{(s_E, a_E): a_E\sim\pi_E(\cdot|s)\}$, generate some random actions $a_{\text{rnd}}$ to form samples $(s_E, a_{\text{rnd}})$, 
assign negative labels to the random samples and positive labels to the expert samples, 
and train a discriminator via supervised learning with those labeled samples.

Furthermore, following from Theorem~\ref{theo:bfs-invariant}, 
we can learn BFSs for $\mu_E$ via TD learning by updating $\tilde{d}(s, a)$ with samples that are uniformly sampled from two replay buffers. 
For continuous action spaces, an expert policy $\mu_{E}$ parametrized with $\theta_{\mu}$ could be directly learned via deep deterministic policy gradients~\cite{lillicrap2015continuous},
as follows: 
\begin{equation}\label{equ:deterministic-update}
\nabla_{\theta_\mu}J(\mu_{\theta}) = \mathbb{E}_{\substack{s\sim d_{\beta}(s) \\a=\mu(s)}}\big[ \nabla_a \tilde{d}(s, a|\theta_{\tilde{d}})\nabla_{\theta_\mu}\mu(s|\theta_\mu)  \big],
\end{equation}
where $\theta_{\tilde{d}}$ is the parameters of $\tilde{d}$ and $\tilde{d}(s, a|\theta_{\tilde{d}})$ is updated based on Equation~\eqref{equ:td-learning}.
To stabilize the policy training, we also create copies $\theta_\mu^\prime$ and $\theta_{\tilde{d}}^\prime$ for the actor networks $\theta_{\mu}$ and the critic networks $\theta_{\tilde{d}}$.
For exploration, we can construct an exploration policy $\mu^\prime$ by adding Gaussian noise to the target policy $\mu$, 
i.e., $\mu^\prime(s_t) = \mu(s_t|\theta_\mu) + \mathcal{N}$, 
where $\mathcal{N}$ can be chosen to suit the environment,
e.g., based on domain knowledge of the environment.

Based on the theoretical and conceptual insights developed above, 
we now present Deterministic and Discriminative Imitation (D2-Imitation), 
an off-policy imitation learning method that trains a deterministic policy with a discriminator.
During training, the algorithm proceeds as follows: 
It first populates $\mathcal{B}^{+}$ with all demonstration state-action pairs $(s_E, a_E)$, 
and then generates random actions $a_{\text{rnd}}$ for each state $s_E$ in $\mathcal{B}^{+}$. 
In the next step, it trains the discriminator parametrized with $\psi$, $\mathcal{D}_{\psi}$, with positive samples from $\mathcal{B}^{+}$ and random samples $(s_E, a_{\text{rnd}})$.
In the policy training, 
whenever a sample $(s, a)$ is generated by rolling out the policy in simulation, 
the discriminator then determines which buffer the sample should be put in. 
We set a probability threshold $P_{th}$ such that $(s, a)$ is put into $\mathcal{B}^{+}$ only when the probability given by $\mathcal{D}_{\psi}$ exceeds $P_{th}$.
Otherwise, the sample $(s, a)$ is put into $\mathcal{B}^{0}$. 
In the following steps, D2-Imitation assigns a constant positive reward $1$ to all samples in $\mathcal{B}^{+}$, 
and zero reward to those in $\mathcal{B}^{0}$, 
and updates the distribution proxy $\theta_{\tilde{d}}$ and the policy $\theta_{\mu}$ based on Equation~\eqref{equ:td-learning} and Equation~\eqref{equ:deterministic-update} respectively.
D2-Imitation is detailed in Algorithm~\ref{algo:d2-imitation}. 

\begin{algorithm}[tb]
\caption{D2-Imitation}\label{algo:d2-imitation}
\begin{algorithmic}[1] 
\STATE Initialize $\mathcal{B}^{0}$ to be empty, $\mathcal{B}^{+}$ with $\{(s_{E}, a_{E}) \}$. 
\STATE Generate random actions $a_{\text{rnd}}$ for state $s_E\in\mathcal{B}^{+}$. 
\STATE Train $\mathcal{D}_{\psi}$ with samples from $\mathcal{B}^{+}$ and $\{(s_{E}, a_{\text{rnd}})\}$.
\FOR{$t=1$ to $T$}
\STATE Collect samples $(s, a)$ by policy $\mu_\theta$.
\STATE Predict probability $p$ of $(s, a)$ being positive with $\mathcal{D}_{\psi}$.
\IF{$p \geq P_{th}$}
\STATE Store transition $(s, a)$ in $\mathcal{B}^{+}$.
\ELSE
\STATE Store transition $(s, a)$ in $\mathcal{B}^{0}$.
\ENDIF
\STATE Sample $\{(s_i^+, a_i^+)\}$ from $\mathcal{B}^{+}$, $\{(s_i^0, a_i^0)\}$ from $\mathcal{B}^{0}$.
\STATE Assign reward $1$ to $\{(s_i^+, a_i^+) \}$ and zero to $\{(s_i^0, a_i^0) \}$.
\STATE Update $\theta_{\tilde{d}}$, $\theta_{\mu}$ based on Equation~\eqref{equ:td-learning}, \eqref{equ:deterministic-update} respectively.
\IF{$t \mod d$}
\STATE Update target networks. 
\ENDIF
\ENDFOR
\end{algorithmic}
\end{algorithm}

Compared to adversarial imitation approaches, D2 differs in the following perspectives:
First, D2 does not assume the expert demonstrations are from a stationary state-action distribution. 
In other words, D2 only requires the samples to be from a conditional distribution $\pi_E(a|s)$ 
rather than from $d_E(s, a)$. 
Consequently, D2 learns a discriminator which conditions on the state, 
whereas adversarial imitation approaches train a discriminator for joint state-action pairs. 
Second, D2 leverages TD learning to directly approximate a proxy to the BFSs, 
while adversarial imitation approaches rely on density ratio estimation, which involves variational estimate of a divergence~\cite{nguyen2010estimating} that generates non-stationary rewards in training.

The above analysis for D2-Imitation assumes that the expert policy is deterministic such that BFSs are the same for Equations~\eqref{equ:feasible-constraint-sa} and~\eqref{equ:feasible-constraint-successor}. 
If the expert policy is stochastic, 
the notion of learning a deterministic policy in effect considers the setting that only one feasible action should be learned. 
For apprenticeship learning in large or continuous state spaces, e.g., continuous control tasks,
it is exceedingly unlikely to have the exact same state twice in the demonstrations.
Thus, learning deterministic policies in apprenticeship learning effectively amounts to distilling one-one mappings from demonstrations. 
In the next section, we show that D2-Imitation still works well empirically even when the given demonstrations are generated by stochastic policies.

\section{Experiments}

We evaluate D2-Imitation on several popular benchmarks for continuous control. 
We first use four physics-based control tasks: Swimmer, Hopper, Walker2d and HalfCheetah,
ranging from low-dimensional control tasks to difficult high-dimensional ones. 
Each task comes with a true cost function~\cite{brockman2016openai}. 
We first generate expert demonstrations by running state-of-the-art RL algorithms, 
including Soft Actor Critic (SAC)~\cite{haarnoja2018soft}, 
Proximal Policy Optimization (PPO)~\cite{schulman2017proximal}, 
Deep Deterministic Policy Gradient (DDPG)~\cite{lillicrap2015continuous}, 
and Twin-Delayed DDPG (TD3)~\cite{fujimoto2018addressing}, 
on the true cost functions. 
We train five different policies using independent seeds for three million time steps. 
To collect demonstrations, 
we roll out each expert policy to gather 20 trajectories
and for each environment we have 100 trajectories altogether. 
Then, to evaluate imitation performance, 
we sample a certain number (5, 10, 15, or 20) of trajectories as demonstrations.
This procedure repeats for other domains, including classic controls and Box2D simulation. 
For the critic and actor networks, 
we use a two layer MLP with ReLu activations ($\tanh$ activation for the last layer in the actor network) and two hidden units (256+256). 
For the discriminator we use the same architecture as~\citet{ho2016generative}: 
a two layer MLP with 100 hidden units and $tanh$ activations. 
These design choices have been empirically shown to be best suited to control tasks. 
We train all networks with Adam~\cite{kingma2014adam} with a learning rate of $10^{-3}$. 
The discriminator is pre-trained for 1000 iterations with batch size 256. 
We set the probability threshold heuristically to $0.8$ (after a sweep from $0.7$ to $0.95$) as it empirically works well across all domains. 
In positive buffer $\mathcal{B}^+$, we force the off-policy samples to account for only a small portion ($~25\%$), 
and the majority is still from demonstrations. 
The implementation of off-policy TD follows that of TD3~\cite{fujimoto2018addressing}, with a double $Q$-net.

\subsection{Comparing to Adversarial Imitation Methods}
\begin{figure}
    \centering
    \includegraphics[width=0.9\linewidth]{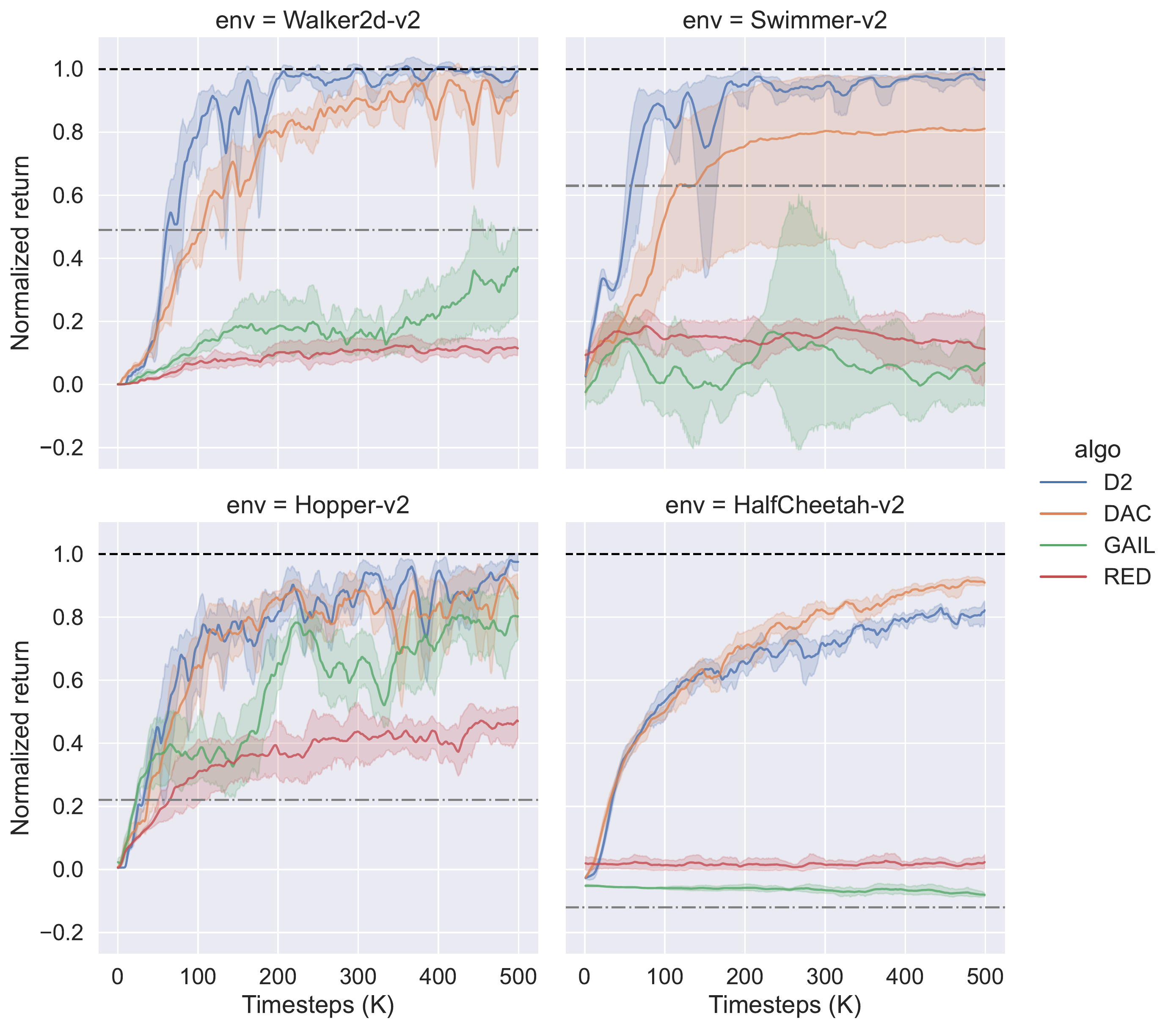}
    \caption{Comparisons of algorithms using 20 SAC demonstration trajectories; $y$-axis: normalized episodic returns ($1.0$ for an expert policy, as marked by the dark dashed line; BC performance is indicated by the gray dotted line). }
    \label{fig:comparison-with-adversarial}
\end{figure}

\begin{figure*}
    \centering
    \includegraphics[width=0.85\linewidth]{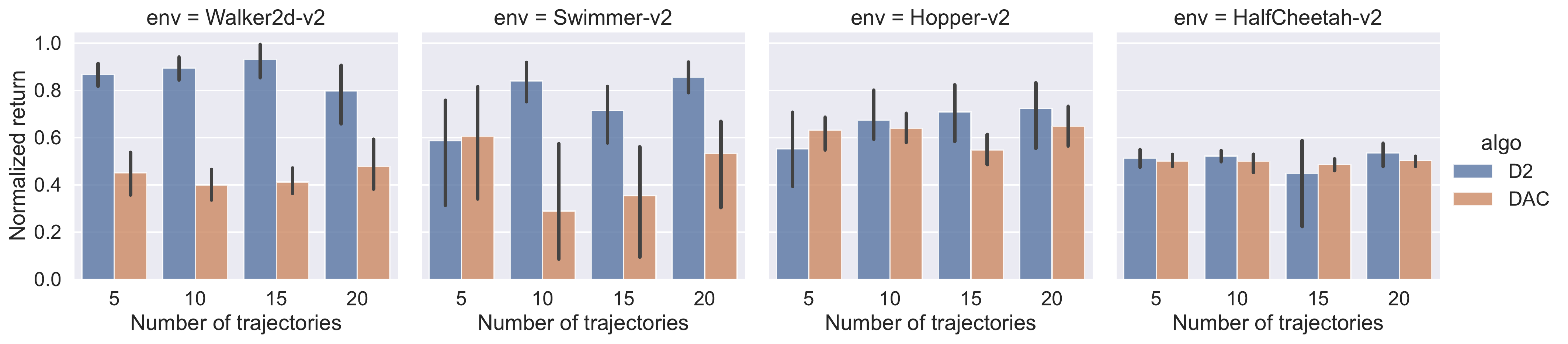}
    \caption{Performance comparisons using different numbers of demonstration trajectories (5, 10, 15, and 20);
    $y$-axis: normalized episodic returns after training for \textit{100K} environment interactions ($1.0$ indicates the returns of an expert policy).}
    \label{fig:comparison-num-traj}
\end{figure*}

We first compare our algorithm to two extensions of adversarial imitation: 
discriminator-actor-critic (DAC)~\cite{kostrikov2018discriminator}, 
and Random Expert Distillation (RED)~\cite{wang2019random}, 
Particularly, DAC addresses sample efficiency by reusing past experiences through a replay buffer, 
while RED does so by stabilizing RL training with a fixed reward function, 
rather than the non-stationary one in GAIL.
We use the original implementation of DAC and RED.\footnote{DAC: \url{https://github.com/google-research/google-research/tree/master/dac}; RED: \url{https://github.com/RuohanW/RED}.}
DAC also adopts the following practice to improve sample efficiency: 
a replay buffer and the deterministic policy updated via TD3 algorithm. 
We also compare the proposed algorithm with a variant of GAIL in which the policy is pre-trained with behavior cloning (BC) to reduce the sample complexity, as used by~\citet{ho2016generative}.
We tune the number of pre-training iterations for different environments to achieve stable performance. 
We also found that too many iterations hinder learning of GAIL as the behavioral cloning may  overfit to the demonstration data. 
We perform evaluation using 10 random seeds. 
For each seed, we log average episodic returns from the simulation environment in training. 
Like~\citet{ho2016generative} and~\citet{kostrikov2018discriminator}, we plot returns normalized by expert performance. 
We compute means over all seeds and visualize standard deviations.

Figure~\ref{fig:comparison-with-adversarial}
shows that D2 is much more sample efficient than the pre-trained GAIL and RED in all tested environments with SAC demonstrations. 
GAIL's performance is worse than reported in~\cite{ho2016generative} as the original GAIL was trained for $\geq$2M timesteps for Hopper and Swimmer, and $\geq$25M for HalfCheetah, while the figure presents results for training with only 100K timesteps.
Furthermore, D2 achieves expert performance with fewer samples than DAC and recovers the expert policy on environment Walker2d and Swimmer. 
D2 also performs slightly better on Hopper and trains more stably than DAC. 
On HalfCheetah, D2 does not outperform DAC but both algorithms converge slowly with the given demonstrations. 
After comparing HalfCheetah demonstrations with the ones for other tasks, 
we found action values demonstrated by SAC policies in HalfCheetah task are mostly near $1.0$ or $-1.0$ (torque limits),
which could make it difficult for a policy network with $\tanh$ as the output activation to approximate.

We also vary the number of demonstration trajectories,
as shown in Figure~\ref{fig:comparison-num-traj}. 
The average performance is reported for each algorithm (10 repeats) after training with only 100K environment interactions. 
D2 consistently outperforms all other methods in Walker2d, Swimmer and Hopper with different numbers of demonstration trajectories,
with statistical significance on Walker2d, Swimmer and Hopper. 

We further evaluate D2 and DAC with 20 trajectories on two more domains, classic controls and Box2D. 
Results are presented in Table~\ref{tab:d2-vs-dac}. 
D2 significantly outperforms DAC after training only for 50K time-steps, and
also significantly faster than DAC in wall clock time 
(as D2 requires no training of any reward function). 

\subsection{Comparing With Off-policy Distribution Matching}\label{sec:exp-distribution-matching}
\begin{figure}
    \centering
    \begin{subfigure}[b]{0.48\linewidth}
        \centering
        \includegraphics[width=\linewidth]{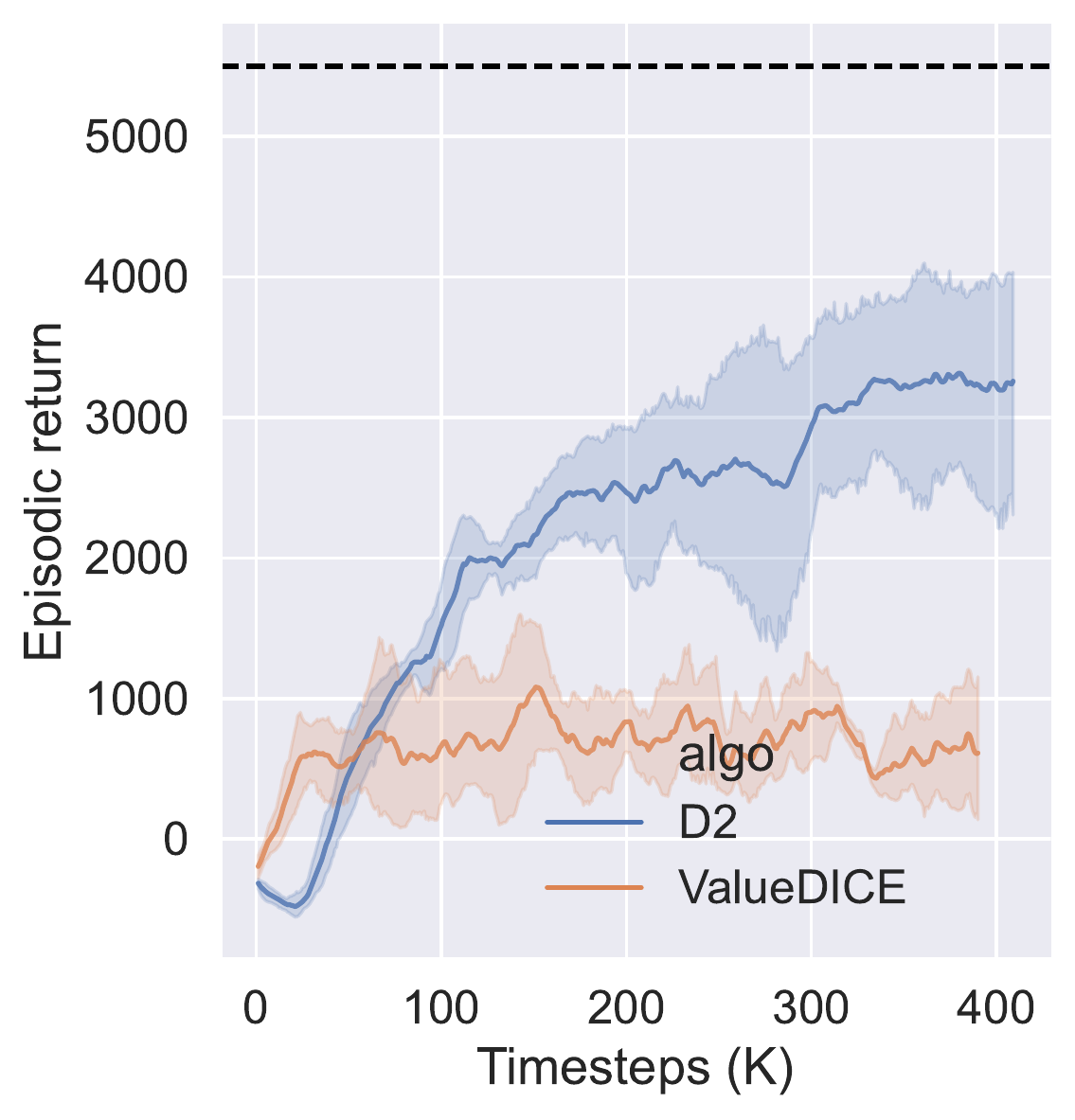}
        \caption{{\small DDPG demonstrations}}    
    \end{subfigure}
    \begin{subfigure}[b]{0.48\linewidth}
        \centering 
        \includegraphics[width=\linewidth]{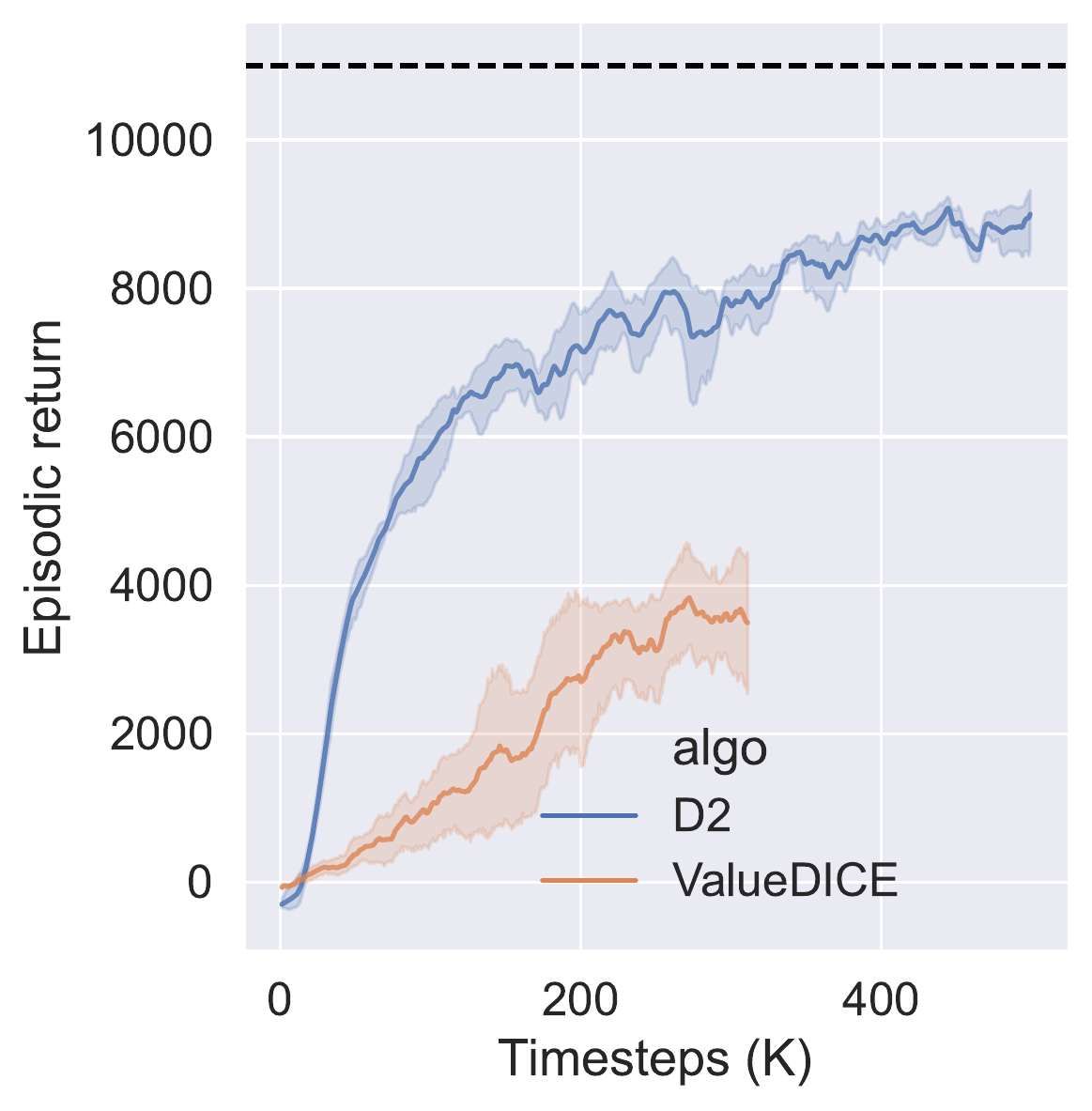}
        \caption{{\small SAC demonstrations}}    
    \end{subfigure}
    \begin{subfigure}[b]{0.48\linewidth}
        \centering 
        \includegraphics[width=\linewidth]{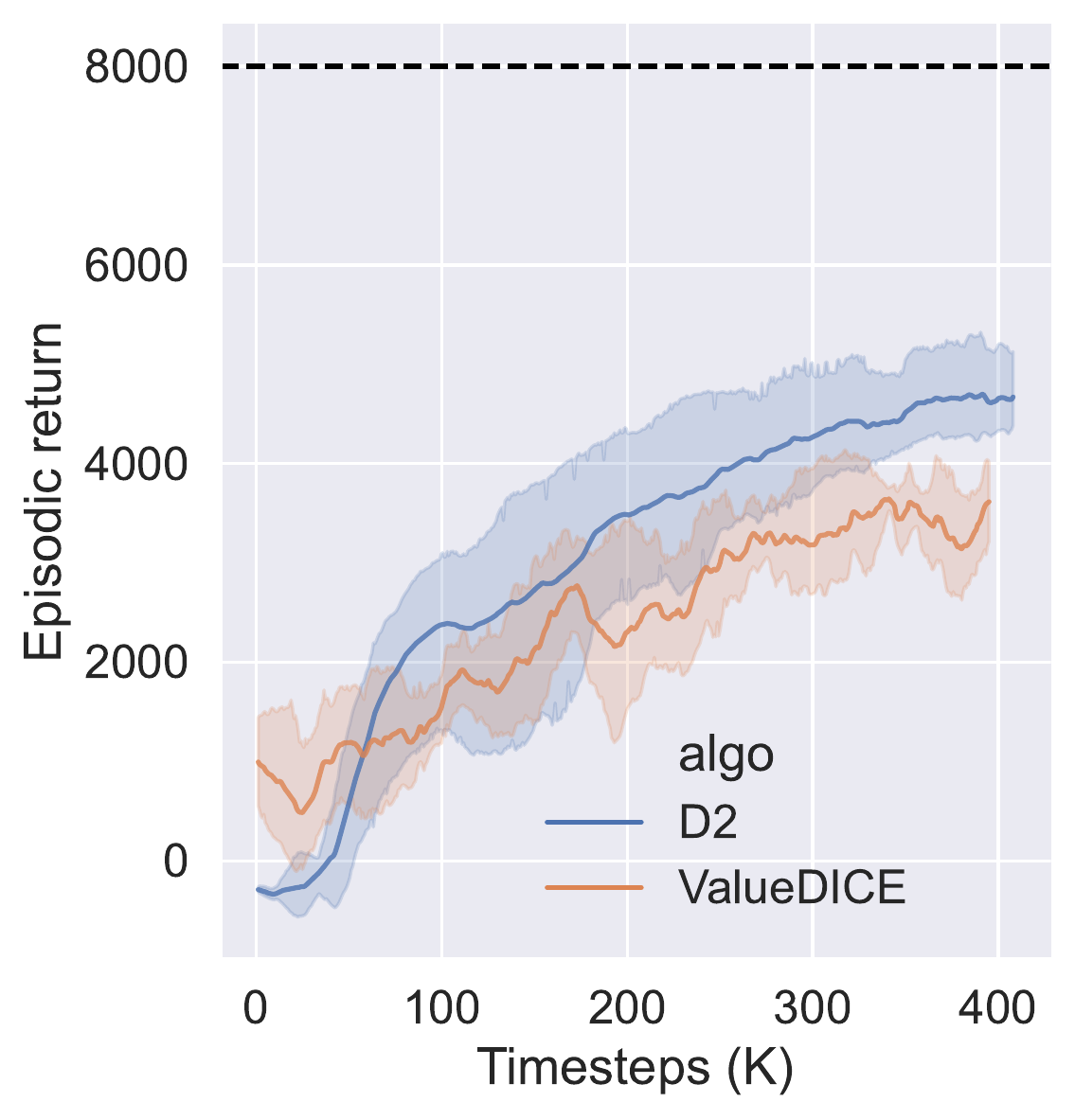}
        \caption{{\small TD3 demonstrations}}    
    \end{subfigure}
    \begin{subfigure}[b]{0.48\linewidth}
        \centering 
        \includegraphics[width=\linewidth]{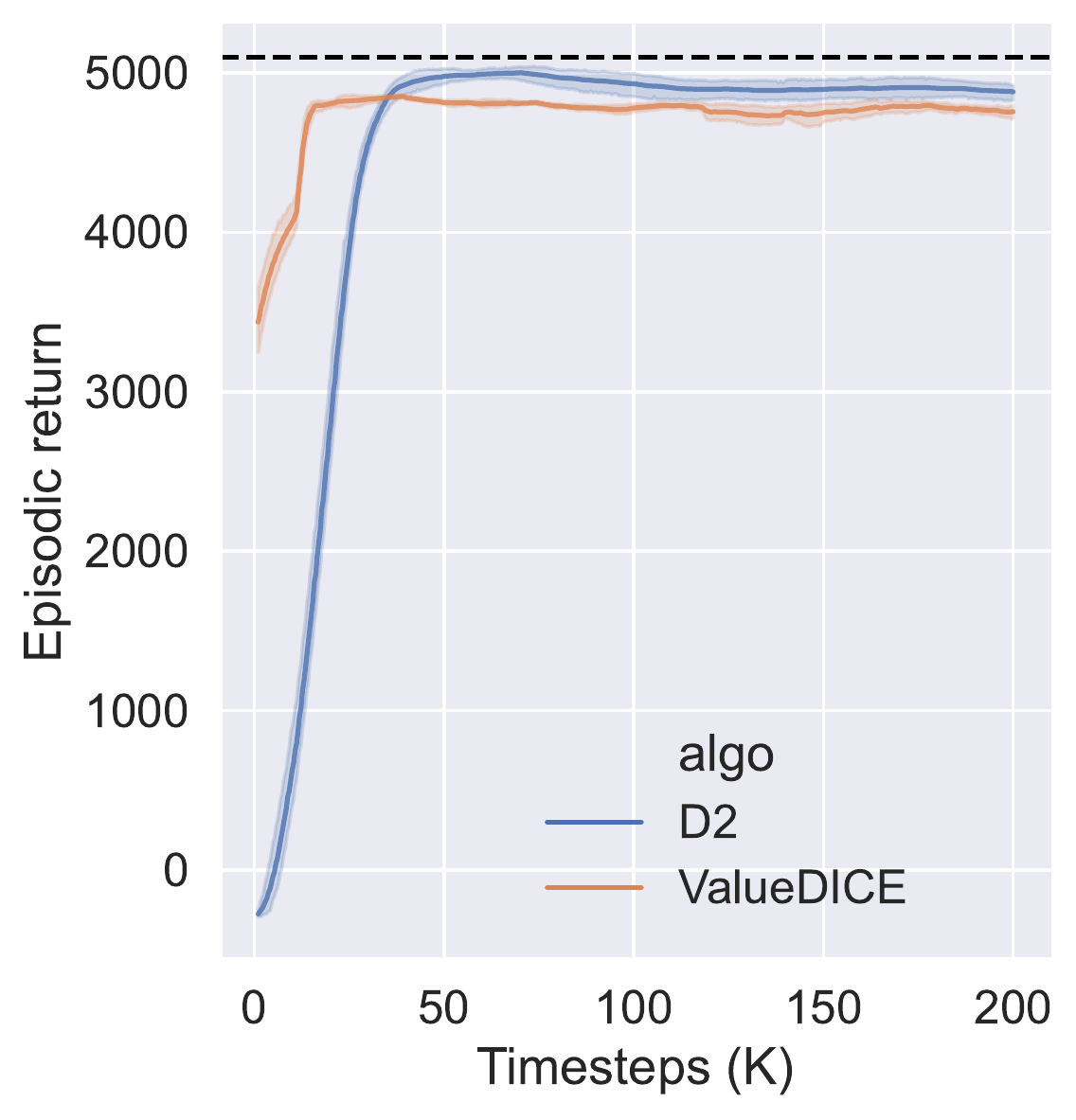}
        \caption{{\small TRPO demonstrations}}    
    \end{subfigure}
    \caption{Comparison with ValueDICE using demonstrations generated from different algorithms. The expert performance is also indicated by the black dashed line.}
    \label{fig:comparison-with-distribution-matching}
\end{figure}
Second, we compare D2 with ValueDICE~\cite{kostrikov2019imitation}, 
the state-of-the-art off-policy distribution matching approach for imitation learning.
ValueDICE leverages the change of variable trick to turn the on-policy distribution matching problem into an off-policy optimization problem. 
However, as pointed out by~\citet{sun2021softdice}, such change of variable implicitly assumes that the on-policy samples should be strictly sampled from a stationary state-action distribution. 
Otherwise, the off-policy optimization objective would be biased in practice. 
We thus evaluate D2-Imitation and ValueDICE on demonstrations generated by different RL algorithms.
Specifically, in addition to the expert trajectories generated in previous evaluation, 
we further use 20 trajectories by policies trained via TRPO, DDPG and TD3.\footnote{TRPO demonstrations are from \url{https://github.com/google-research/google-research/tree/master/value_dice}.}
Policies in TRPO and SAC are stochastic while those in DDPG and TD3 are deterministic. 
We then evaluate D2 and ValueDICE on these four types of demonstrations. 
Figure~\ref{fig:comparison-with-distribution-matching} presents the training performance of two algorithms on the HalfCheetah environment (we pick HalfCheetah as it was reported that learning the expert policy could take more than 20 million timesteps for imitation learning methods~\cite{kostrikov2018discriminator}).
Although ValueDICE achieves good sample efficiency with demonstrations given by TRPO (stochastic policies), 
its performance becomes worse when demonstrations change. 
Empirically, ValueDICE is sensitive to demonstrations, and can fail to learn the expert policy. 
For example, ValueDICE initially progresses on DDPG demonstrations, then plateaus after 50K time-steps and never reaches expert performance.
Furthermore, we found that even with demonstrations that are generated by stochastic policies, e.g., SAC, D2 still outperforms ValueDICE. 
Intuitively, this could be because a stochastic policy can always be turned into deterministic by picking the best or mean action,
the discriminator in D2 is trained to predict that action for each state,
and off-policy TD learns a deterministic policy that distills such action information from replay buffers.  
Overall, compared with ValueDICE, the performance of D2 imitation is more consistent across different types of demonstrations. 

\begin{table}
\begin{tabular}{ccccc}
\toprule
 & IP & MCC & BW & LLC \\\midrule
 & \multicolumn{4}{c}{\textbf{Returns for 50K training timesteps}}    \\\midrule
Expert PPO & 1000.0 & -0.050 & 302.49  &  181.21\\\midrule
BC & 134.2 & -0.159 & -106.30   & -75.62 \\\midrule
DAC & \textbf{1000.0} & -0.085 & 297.58 & -73.49\\\midrule
D2 & \textbf{1000.0} &  \textbf{-0.045}\tiny{**} &  \textbf{308.32} & \textbf{-34.63}\tiny{**} \\\midrule
 & \multicolumn{4}{c}{\textbf{Wall clock time for 50K timesteps (s)}} \\\midrule
DAC & 2755.0 & 2937.4 & 2458.8 & 2708.4  \\\midrule
D2 & \textbf{1526.9}\tiny{**}  & \textbf{1540.9}\tiny{**} &  \textbf{1720.2}\tiny{**} & \textbf{1586.2}\tiny{**} \\\bottomrule
\end{tabular}
\caption{D2 vs DAC on InvertedPendulum (IP), MountainCarContinuous (MCC), 
BipedalWalker (BW) and LunarLanderContinuous (LLC); $**$ for significance ($t$-test).}
\label{tab:d2-vs-dac}
\end{table}

\subsection{Ablation Studies}

The discriminator is crucial for D2-Imitation to work properly and guarantees convergence to  expert performance. 
We perform ablations on the discriminator and compare the training performance of D2-Imitation with one variant: 
one without any discriminator (denoted \textit{without-discriminator}), 
which just puts on-policy samples to $\mathcal{B}^{0}$, assigns $0$ reward to them and gives $+1$ reward to demonstration samples in $\mathcal{B}^{+}$, an idea adopted in Soft-Q Imitation Learning (SQIL)~\cite{reddy2019sqil}.
Figure~\ref{fig:ablation-discriminator} shows that training quickly plateaus or even collapses if no discriminator is applied. 
Furthermore, this plateauing effect happens more quickly when fewer demonstration trajectories are used. 
This phenomenon has also been reported in SQIL. 
To avoid such training collapse, SQIL requires early stopping of the training process by judging whether the squared soft Bellman error converges to a minimum, 
which we argue can be challenging as this early stopping also relates to the number of trajectories used in training. 
Moreover, even with perfect early stopping (say 100K interactions as in the given figure in which the true reward function is known), 
the policy still fails to achieve the expert performance.

\begin{figure}
    \centering
    \includegraphics[width=0.9\linewidth]{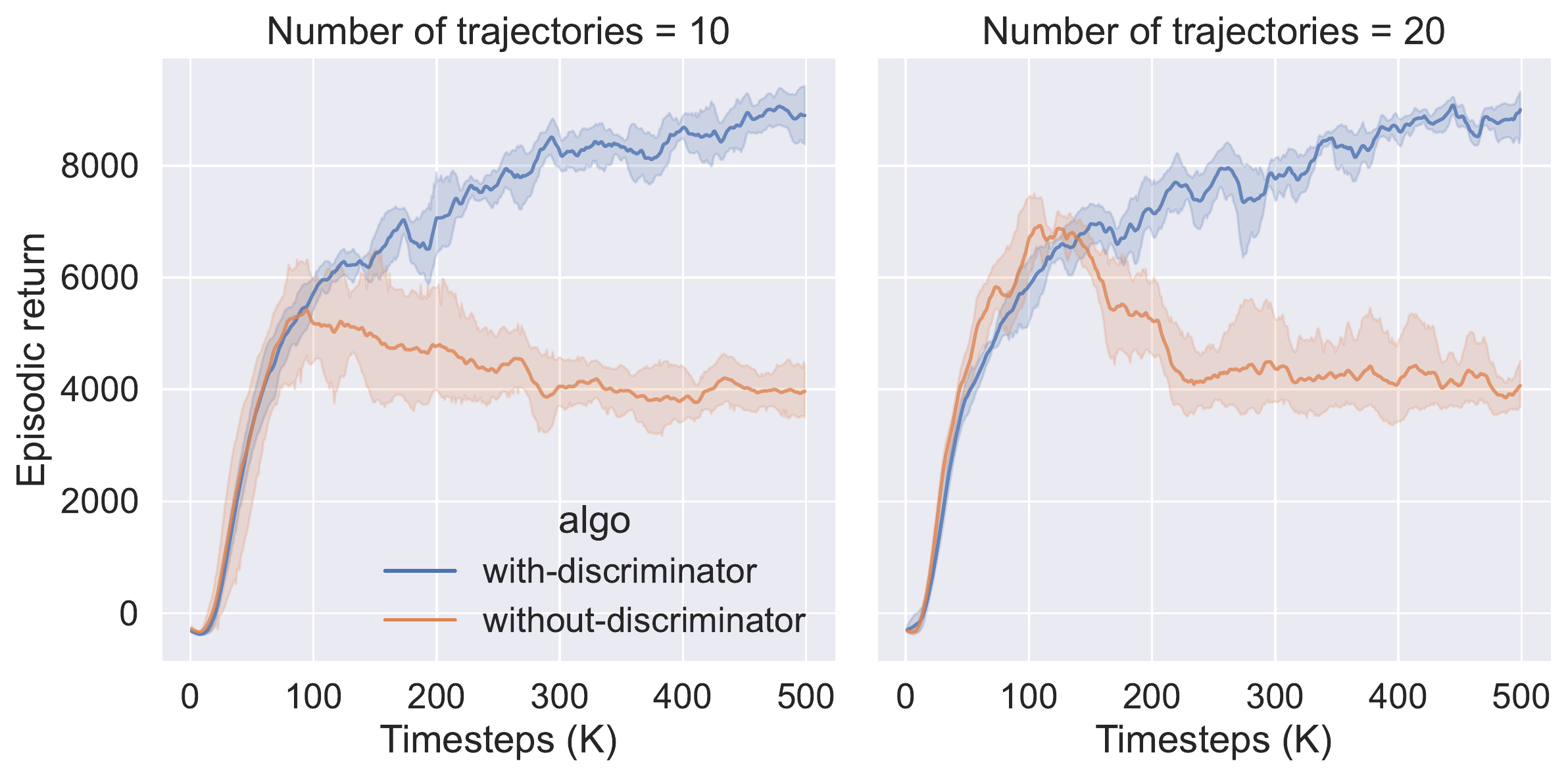}
    \caption{Effect of the discriminator on the training performance with 20 and 10 SAC demonstration trajectories.}
    \label{fig:ablation-discriminator}
\end{figure}

\section{Related Work}
Adversarial imitation casts imitation learning as a distribution matching problem~\cite{ho2016generative} and leverages GANs~\cite{goodfellow2014generative} to minimize the Jensen-Shannon divergence between distributions induced by the expert and the learning policy. 
This approach avoids the difficulty of learning reward functions directly from demonstrations but is generally sample intensive.
To improve sample efficiency, many methods extend adversarial imitation to be off-policy. 
For instance, Discriminator-actor-critic (DAC)~\cite{kostrikov2018discriminator,sasaki2018sample} improves sample efficiency by reusing previous samples stored in a relay buffer.
However, this approach still relies on non-stationary reward signals generated by the discriminator, which can make the critic estimation hard and training unstable. 
Recent work proposes to train a fixed reward function by estimating the support of demonstrations and training  critics with the fixed reward~\cite{wang2019random}.
This support estimation itself could be hard given that only a limited number of empirical samples are available from the considered distributions. 
Another line of off-policy distribution matching approaches focuses on estimating the critics directly without learning any reinforcement signals~\cite{sasaki2018sample,kostrikov2019imitation}.
The state-of-the-art along this line is  ValueDICE~\cite{kostrikov2019imitation}, which casts distribution matching as off-policy density ratio estimation and updates the policy directly via a max-min optimization. 
However, as we show in the analysis and experiments, these methods can be ill-posed when the demonstrations are generated from deterministic policies, and their performance can also be sensitive to the demonstrations used in training.

Recently, some non-adversarial imitation learning approaches have been proposed. 
For example, offline non-adversarial imitation learning~\cite{arenz2020non} reduces the min-max in ValueDICE to policy iteration, 
which however still requires estimating density ratios and can potentially inherit the same issues from ValueDICE.
By contrast, D2-Imitation avoids density ratio estimation and is more robust to different demonstrations. 
Primal Wasserstein imitation learning~\cite{dadashi2020primal} avoids adversarial training by introducing an off-line cost function, 
which, however, requires a domain-specific metric and can be challenging to properly specify for different environments.
D2-Imitation does not need such domain knowledge and can be applied in many different settings.  
Our reward design for D2-Imitation looks superficially similar to that of Soft $Q$ Imitation Learning (SQIL), 
which assigns $+1$ reward to demonstration and zero for all interaction samples~\cite{reddy2019sqil}. 
However, D2-Imitation is fundamentally different in that the reward assignment is theoretically consistent with the use of deterministic policies.

\section{Conclusion}
In this paper, we revisited the foundations of adversarial imitation. 
We leveraged the similarity between the Bellman equation and the stationarity equation to derive a TD learning approach, which directly learns a special proxy, i.e., basic feasible solutions, for the expert state-action distribution. 
Moreover, we showed that the use of deterministic policies simplifies TD learning and yields a practical learning algorithm, 
D2-Imitation, which operates by first partitioning samples into two replay buffers and then learning a deterministic policy via off-policy deterministic policy gradients. 
Finally, the notion of partitioning samples into two groups theoretically follows from the use of a deterministic policy. 
Our empirical results demonstrated that D2-Imitation is effective in achieving good sample efficiency, and outperforms many adversarial imitation approaches on different control benchmark tasks with demonstrations generated by either deterministic or stochastic policies. 
Also, D2-Imitation consistently outperforms the state-of-the-art off-policy distribution matching method when training with various different types of demonstrations. 
In conclusion, D2-Imitation, as a direct result of leveraging two novel insights in the distribution matching formulation, is a simple yet very effective sample-efficient imitation learning approach. 

\section*{Acknowledgements}
Mingfei Sun is partially supported by funding from Microsoft Research. 
The experiments were made possible by a generous equipment grant from NVIDIA. 

\bibliography{ref}

\end{document}